\newif\ifcomm
\newif\iflong
\newtheorem{thm}{Theorem}
\newtheorem{cor}[thm]{Corollary}
\newtheorem{lem}[thm]{Lemma}
\newcounter{assumption}%[section]
\renewcommand{\theassumption}{A\arabic{assumption}}
\newenvironment{ass}[1][]{\begin{trivlist}\item[] \refstepcounter{assumption}%
 {\bf Assumption\ \theassumption\ #1} }{%\par\nobreak\noindent\sl\ignorespaces}{%
 \ifvmode\smallskip\fi\end{trivlist}}
\newcommand{\norm}[1]{\left\Vert#1\right\Vert}
\newcommand{\abs}[1]{\left\vert#1\right\vert}
\newcommand{\Real}{\mathbb R}                        % Real numbers
\newcommand{\Prob}[1]{{\mathbb P}\left(#1\right)}    % Probabilities; example: \Prob{X>\eps}<1-\delta
\newcommand{\EE}[1]{{\mathbb E}\left[#1\right]}      % Expectations
\newcommand{\E}{{\mathbb E}}                         % Expectations  when we want to control the parenthesis
\newcommand{\one}[1]{{\mathbb I}_{\{#1\}}}           % Characteristic function
\newcommand{\ra}{\rightarrow}
\newcommand{\ip}[2]{\langle #1,#2 \rangle}
\newcommand{\beq}{\begin{equation}}
\newcommand{\eeq}{\end{equation}}
\newcommand{\beqa}{\begin{eqnarray}}
\newcommand{\eeqa}{\end{eqnarray}}
\newcommand{\beqan}{\begin{eqnarray*}}
\newcommand{\eeqan}{\end{eqnarray*}}
\newcommand{\ben}{\begin{eqnarray*}}
\newcommand{\een}{\end{eqnarray*}}
\newcommand{\bea}{\begin{align*}}
\newcommand{\eea}{\end{align*}}
   \newcommand\comm[1]{\textcolor{blue}{ #1}}
   \newcommand{\mtodo}[2]{\todo{{\bf #1}: #2}} % To add comments into the text; the first argument is "who", the second is "what"
   \def\here#1{{\bf $\langle\langle$#1$\rangle\rangle$}}
   \newcommand\comm[1]{}
   \newcommand{\mtodo}[2]{}
   \def\here#1{}
\newcommand{\cX}{{\cal X}}
\newcommand{\cC}{{\cal C}}
\newcommand{\cA}{{\cal A}}
\newcommand{\cF}{{\cal F}}
\renewcommand{\phi}{\varphi}
\newcommand{\cN}{{\cal N}}
\title{Online Learning in Markov Decision Processes with Adversarially Chosen Transition Probability Distributions}
\author{Yasin Abbasi-Yadkori \\
%\small  School of Math \\ %[-0.8ex]
\small QUT \\
\small \texttt{yasin.abbasiyadkori@qut.edu.au}
\And
Peter L. Bartlett \\
%\small  School of Math \\ % [-0.8ex]
\small UC Berkeley and QUT \\
\small \texttt{bartlett@eecs.berkeley.edu}\\
\And
Csaba Szepesv\'ari\\
%\small  School of Math \\ %[-0.8ex]
\small University of Alberta \\
\small \texttt{szepesva@cs.ualberta.ca}
}
\begin{document}

\maketitle

\begin{abstract}
%We study Markov decision process problems with adversarially chosen transition probability distributions and loss functions with a finite state and action space. We propose an algorithm and prove the surprising result that its regret grows sublinearly with time, provided that the transition probability kernel satisfies a uniform mixing condition. This is the first regret bound for an online MDP problem with adversarially chosen transition probability distributions. Designing a computationally efficient algorithm with small regret remains an open problem. 
We study the problem of learning Markov decision processes with finite state and action spaces when the transition probability distributions and loss functions are chosen adversarially and are allowed to change with time. We introduce an algorithm whose regret with respect to any policy in a comparison class grows as the square root of the number of rounds of the game, provided the transition probabilities satisfy a uniform mixing condition. Our approach is efficient as long as the comparison class is polynomial and we can compute expectations over sample paths for each policy.
%We study an efficient algorithm introduced by \citet{Even-Dar-Kakade-Mansour-2009} and \citet{Neu-Gyorgy-Szepesvari-2010} for the special case of loop-free episodic problems with fixed transition distributions, and demonstrate an improved regret bound. 
Designing an efficient algorithm with small regret for the general case remains an open problem.
\end{abstract}

%\begin{keywords}
%Online Learning, Markov Decision Processes, Regret Analysis 
%\end{keywords}

\section{Notation}
Let $\cX$ be a finite state space and $\cA$ be a finite action space. Let $\Delta_S$ be the space of probability distributions over set $S$. Define a policy $\pi$ as a mapping from the state space to $\Delta_{\cA}$, $\pi:\cX\ra\Delta_{\cA}$. We use $\pi(a|x)$ to denote the probability of choosing action $a$ in state $x$ under policy $\pi$. A random action under policy $\pi$ is denoted by $\pi(x)$. A transition probability kernel (or transition model) $m$ is a mapping from the direct product of the state and action spaces to $\Delta_{\cX}$: $m:\cX \times \cA \ra \Delta_{\cX}$. Let $P(\pi,m)$ be the transition probability matrix of policy $\pi$ under transition model $m$. A loss function is a bounded real-valued function over state and action spaces, $\ell:\cX\times \cA \ra \Real$. For a vector $v$, define $\norm{v}_1  = \sum_i \abs{v_i}$. For a real-valued function $f$ defined over $\cX\times \cA$, define $\norm{f}_{\infty,1} = \max_{x\in \cX} \sum_{a\in \cA} \abs{f(x,a)}$. The inner product between two vectors $v$ and $w$ is denoted by $\ip{v}{w}$.

%%%%%%%%%%%%%%%%%%%%%%%%%%%%%%%%%%%%%%%%%%%%%%%%%%%%%%%%%%%%%%%%%%

\section{Introduction}

Consider the following game between a learner and an adversary: at round $t$, the learner chooses a policy $\pi_t$ from a policy class $\Pi$. In response, the adversary chooses a transition model $m_t$ from a set of models $M$ and a loss function $\ell_t$. The learner takes action $a_t  \sim \pi_t(.|x_t)$, moves to state $x_{t+1} \sim m_t(.|x_t,a_t)$ and suffers loss $\ell_t(x_t, a_t)$. To simplify the discussion, we assume that the adversary is oblivious, i.e. its choices do not depend on the previous choices of the learner. We assume that $\ell_t \in [0,1]$. In this paper, we study the full-information version of the game, where the learner observes the transition model $m_t$ and the loss function $\ell_t$ at the end of round $t$. The game is shown in Figure~\ref{alg:online-mdp-game}. The objective of the learner is to suffer low loss over a period of $T$ rounds, while the performance of the learner is measured using its regret with respect to the total loss he would have achieved had he followed the stationary policy in the comparison class $\Pi$ minimizing the total loss.

\citet{Even-Dar-Kakade-Mansour-2004} prove a hardness result for MDP problems with adversarially chosen transition models. Their proof, however, seems to have gaps as it assumes that the learner chooses a deterministic policy before observing the state at each round. Note that an online learning algorithm only needs to choose an action at the current state and does not need to construct a complete deterministic policy at each round. Their hardness result applies to deterministic transition models, while we make a mixing assumption in our analysis. Thus, it is still an open problem whether it is possible to obtain a computationally efficient algorithm with a sublinear regret. 

\citet{Yu-Mannor-2009,Yu-Mannor-2009-b} study the same setting, but obtain only a regret bound that scales with the amount of variation in the transition models. This regret bound can grow linearly with time. 

\citet{Even-Dar-Kakade-Mansour-2009} prove regret bounds for MDP problems with a fixed and known transition model and adversarially chosen loss functions. In this paper, we prove regret bounds for MDP problems with adversarially chosen transition models and loss functions. We are not aware of any earlier regret bound for this setting. Our approach is efficient as long as the comparison class is polynomial and we can compute expectations over sample paths for each policy.

%The previous approach is not applicable if we want to compete with all policies. We study an efficient algorithm introduced by \citet{Neu-Gyorgy-Szepesvari-2010} for the special case of loop-free episodic problems with fixed transition models, and demonstrate an improved regret bound.

%\citet{Neu-Gyorgy-Szepesvari-2010,Neu-Gyorgy-Szepesvari-Antos-2010} extend the results of \citet{Even-Dar-Kakade-Mansour-2009} to the \textit{bandit information} setting, when the loss function is revealed only at the state/actions played by the learner. 

MDPs with changing transition kernels are good models for a wide range of problems, including dialogue systems, clinical trials, portfolio optimization, two player games such as poker, etc. %For example, consider a day trading problem. The problem is episodic with the same initial and end states. The trader places a limit order at each timestep. This means, he specifies a buy limit order or a sell limit order. At the next timestep, depending on the market price, the order might be executed. The state is the current holdings and the timestep, the action is the buy limit order and the sell limit order, and the transition model is the trajectory of the market price. Notice that we have included the time in the state representation, so that the assumption that the model doesn't change during an episode is satisfied. \todoy[inline]{but the transition model is deterministic here..}

%Another example is clinical trials. Here, the state space is the set of all possible responses from the patient given any sequence of treatments. The actions are the available treatments at each stage. The model is determined by the patient, and thus, changes at each trial. As we don't observe the full model at the end, the results of Section~\ref{sec:episodic-problems} are applicable.  

%The rest of the paper is organized as follows. In Section~\ref{sec:online-mdp} we introduce an algorithm for MDP problems with adversarially chosen transition models and loss functions. The algorithm is based on the Exponentially Weighted Average (\textsc{EWA}) algorithm of \citet{Vovk-1990} and \citet{Littlestone-Warmuth-1994}. The algorithm maintains a probability distribution over the set $\Pi$ of policies, and thus it is computationally expensive, unless the set $\Pi$ has polynomial size.  

%%%%%%%%%%%%%%%%%%%%%%%%%%%%%%%%%%%%%%%%%%%%%%%%%%%%%%%%%%%%%%%%%%

\section{Online MDP Problems}
\label{sec:online-mdp}

\begin{figure}%[H]
\begin{center}
\framebox{\parbox{12cm}{
\begin{algorithmic}
\STATE Initial state: $x_0$
\FOR{$t:=1,2,\dots$}
\STATE Learner chooses policy $\pi_t$
\STATE Adversary chooses model $m_t$ and loss function $\ell_t$
\STATE Learner takes action $a_t \sim \pi_t(.|x_t)$
\STATE Learner suffers loss $\ell_t(x_t, a_t)$
\STATE Update state $x_{t+1} \sim m_t(.|x_t,a_t)$
\STATE Learner observes $m_t$ and $\ell_t$
\ENDFOR
\end{algorithmic}
}}
\end{center}
\caption{Online Markov Decision Processes}
\label{alg:online-mdp-game}
\end{figure}

Let $A$ be an online learning algorithm that generates a policy $\pi_t$ at round $t$. Let $x_t^A$ be the state at round $t$ if we have followed the policies generated by algorithm $A$. Similarly, $x_t^\pi$ denotes the state if we have chosen the same policy $\pi$ up to time $t$. %We consider deterministic policies in this section. 
Let $\ell(x,\pi) = \ell(x,\pi(x))$. The regret of algorithm $A$ up to round $T$ with respect to any policy $\pi\in\Pi$ is defined by
\[
R_T(A, \pi) = \sum_{t=1}^T \ell_t(x_t^A, a_t) - \sum_{t=1}^T \ell_t(x_t^\pi, \pi) \, ,
\]
where $a_t = \pi_t(x_t^A)$. Note that  the regret with respect to $\pi$ is defined in terms of the sequence of states $x_t^\pi$ that would have been visited under policy $\pi$. Our objective is to design an algorithm that achieves low regret with respect to any policy $\pi$. 

In the absence of state variables, the problem reduces to a \textit{full information online learning problem} \citep{Cesa-Bianchi-Lugosi-2006}. The difficulty with MDP problems is that, unlike the full information online learning problems, the choice of policy at each round changes the future states and losses. The main idea behind the design and the analysis of our algorithm is the following regret decomposition:
\beq
\label{eq:decomposition}
R_T(A,\pi) = \sum_{t=1}^T \ell_t(x_t^A, a_t) - \sum_{t=1}^T \ell_t(x_t^{\pi_t}, \pi_t) + \sum_{t=1}^T \ell_t(x_t^{\pi_t}, \pi_t) - \sum_{t=1}^T \ell_t(x_t^\pi, \pi) \; .
\eeq
Let
\begin{align*}
B_T(A) &= \sum_{t=1}^T \ell_t(x_t^A, a_t) - \sum_{t=1}^T \ell_t(x_t^{\pi_t}, \pi_t) \,, \\
C_T(A,\pi) &= \sum_{t=1}^T \ell_t(x_t^{\pi_t}, \pi_t) - \sum_{t=1}^T \ell_t(x_t^\pi, \pi)\; .
\end{align*}
Notice that the choice of policies has no influence over future losses in $C_T(A,\pi)$. Thus, $C_T(A,\pi)$ can be bounded by a specific reduction to full information online learning algorithms (to be specified later). %We will show that if algorithm $A$ is a full information online learning algorithm with a sublinear regret bound, then $C_T(A,\pi)$ can be bounded by a sublinear term. 
Also, notice that the competitor policy $\pi$ does not appear in $B_T(A)$. In fact, $B_T(A)$ depends only on the algorithm $A$. We will show that if algorithm $A$ and the class of models satisfy the following two ``smoothness'' assumptions, then $B_T(A)$ can be bounded by a sublinear term.
\begin{ass}[Rarely Changing Policies]
\label{ass:slow-changing-policy}
Let $\alpha_t$ be the probability that algorithm $A$ changes its policy at round $t$. There exists a constant $D$ such that for any $1\le t \le T$, any sequence of models $m_1,\dots, m_t$ and loss functions $\ell_1,\dots,\ell_t$, $\alpha_t \le D/\sqrt{t}$.
\end{ass}
\begin{ass}[Uniform Mixing]
\label{ass:uniform-mixing}
There exists a constant $\tau>0$ such that for all distributions $d$ and $d'$ over the state space, any deterministic policy $\pi$, and any model $m\in M$,
\[
\norm{d P(\pi,m) - d' P(\pi,m)}_1 \le e^{-1/\tau} \norm{d - d'}_1 \;.
\]
\end{ass}
As discussed by \citet{Neu-Gyorgy-Szepesvari-Antos-2010}, if Assumption~\ref{ass:uniform-mixing} holds for deterministic policies, then it holds for all policies. 

\subsection{Full Information Algorithms}

\begin{figure}
\begin{center}
\framebox{\parbox{12cm}{
\begin{algorithmic}
\STATE $N$: number of experts, $T$: number of rounds.
\STATE Initialize $w_{i,0} = 1$ for each expert $i$.
\STATE $W_0 = N$.
\FOR{$t:=1,2,\dots$}
\STATE For any $i$, $p_{i,t} = w_{i,t-1}/W_{t-1}$.
\STATE Draw $I_t$ such that for any $i$, $\Prob{I_t = i} = p_{i,t}$.
\STATE Choose the action suggested by expert $I_t$.
\STATE The adversary chooses loss function $c_t$.
\STATE The learner suffers loss $c_t(I_t)$.
\STATE For expert $i$, $w_{i,t} = w_{i,t-1} e^{-\eta c_t(i)}$.
\STATE $W_t = \sum_{i=1}^N w_{i,t}$.
\ENDFOR
\end{algorithmic}
}}
\end{center}
\caption{The \textsc{EWA} Algorithm}
\label{fig:ewa}
\end{figure}

We would like to have a full information online learning algorithm that rarely changes its policy. The first candidate that we consider is the well-known Exponentially Weighted Average (\textsc{EWA}) algorithm~\citep{Vovk-1990,Littlestone-Warmuth-1994} shown in Figure~\ref{fig:ewa}. In our MDP problem, the \textsc{EWA} algorithm chooses a policy $\pi\in\Pi$ according to distribution
\beq
\label{eq:policy}
q_t(\pi) \propto \exp\left( -\lambda \sum_{s=1}^{t-1} \EE{\ell_s(x_s^{\pi}, \pi)} \right),\quad \lambda > 0\,,
\eeq
The policies that this \textsc{EWA} algorithm generates most likely are different in consecutive rounds and thus, the \textsc{EWA} algorithm might change its policy frequently. However, a variant of \textsc{EWA}, called Shrinking Dartboard (SD)~\citep{Geulen-Vocking-Winkler-2010} and shown in Figure~\ref{alg:SD}, satisfies Assumption~\ref{ass:slow-changing-policy}. Our algorithm, called \textsc{SD-MDP}, is based on the \textsc{SD} algorithm and is shown in Figure~\ref{alg:SD-MDP}. Notice that the algorithm needs to know the number of rounds, $T$, in advance.

\begin{figure}
\begin{center}
\framebox{\parbox{12cm}{
\begin{algorithmic}
\STATE $N$: number of experts, $T$: number of rounds.
\STATE $\eta = \min\{\sqrt{\log N / T}, 1/2\}$.
\STATE Initialize $w_{i,0} = 1$ for each expert $i$.
\STATE $W_0 = N$.
\FOR{$t:=1,2,\dots$}
\STATE For any $i$, $p_{i,t} = w_{i,t-1}/W_{t-1}$.
\STATE With probability $\beta_t = w_{I_{t-1},t-1}/w_{I_{t-1},t-2}$ choose the previously selected expert, $I_t = I_{t-1}$ and with probability $1-\beta_t$, choose $I_t$ based on the distribution $q_t = (p_{1,t},\dots, p_{N,t})$.
\STATE Learner takes the action suggested by expert $I_t$.
\STATE The adversary chooses loss function $c_t$.
\STATE The learner suffers loss $c_t(I_t)$.
\STATE For all experts $i$, $w_{i,t} =  w_{i,t-1} (1-\eta)^{c_{t}(i)}$.
\STATE $W_t = \sum_{i=1}^N w_{i,t}$.
\ENDFOR
\end{algorithmic}
}}
\end{center}
\caption{The Shrinking Dartboard Algorithm}
\label{alg:SD}
\end{figure}

Consider a basic full information problem with $N$ experts. Let $R_T(\textsc{SD}, i)$ be the regret of the \textsc{SD} algorithm with respect to expert $i$ up to time $T$. We have the following results for the \textsc{SD} algorithm.
\begin{thm}
\label{thm:regret-SD}
For any expert $i\in \{1,\dots,N\}$,
\[
R_T(\textsc{SD}, i) \le 4 \sqrt{T \log N} + \log N \,,
\]
and also for any $1\le t\le T$,
\[
\Prob{\hbox{Switch at time $t$}} \le \sqrt{\frac{\log N}{T}}\;.
\]
\end{thm}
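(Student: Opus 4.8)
The plan is to exploit the defining feature of \textsc{SD}: although it correlates its choices across rounds in order to switch experts rarely, its per-round marginal over experts is exactly the \textsc{EWA} distribution $q_t=(p_{1,t},\dots,p_{N,t})$. Once this is established, the regret bound reduces to the textbook multiplicative-weights computation, and the switching bound falls directly out of the definition of the stay probability $\beta_t$. So I would prove the two displayed inequalities essentially independently, both resting on the marginal identity $\Prob{I_t=i}=p_{i,t}$ for all $i,t$.

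First I would establish this identity by induction on $t$. By the weight recursion $w_{i,t-1}=w_{i,t-2}(1-\eta)^{c_{t-1}(i)}$, the stay probability satisfies $\beta_t=w_{I_{t-1},t-1}/w_{I_{t-1},t-2}=(1-\eta)^{c_{t-1}(I_{t-1})}$, so conditioned on $I_{t-1}=i$ the algorithm keeps $i$ with probability $(1-\eta)^{c_{t-1}(i)}$ and otherwise redraws from $q_t$. Writing
\[
\Prob{I_t=j}=\Prob{I_{t-1}=j}\,(1-\eta)^{c_{t-1}(j)}+p_{j,t}\sum_i \Prob{I_{t-1}=i}\bigl(1-(1-\eta)^{c_{t-1}(i)}\bigr),
\]
substituting the inductive hypothesis $\Prob{I_{t-1}=i}=p_{i,t-1}=w_{i,t-2}/W_{t-2}$, a short computation using $\sum_i w_{i,t-1}=W_{t-1}$ collapses the right-hand side to $w_{j,t-1}/W_{t-1}=p_{j,t}$. (Here I use that the losses, hence the weights $w_{i,t}$ and the distribution $q_t$, are deterministic given the past, which holds for the oblivious adversary.) Consequently the expected loss of \textsc{SD} at round $t$ is $\EE{c_t(I_t)}=\ip{p_t}{c_t}$, where $\ip{p_t}{c_t}=\sum_i p_{i,t}c_t(i)$, so $\EE{R_T(\textsc{SD},i)}=\sum_{t=1}^T\ip{p_t}{c_t}-\sum_{t=1}^T c_t(i)$; the stated bound should be read as a bound on this expected regret.

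For the regret bound I would run the standard potential argument on $W_t$. Convexity of $x\mapsto(1-\eta)^x$ gives $(1-\eta)^{c_t(i)}\le 1-\eta c_t(i)$ for $c_t(i)\in[0,1]$, whence $W_t=\sum_i w_{i,t-1}(1-\eta)^{c_t(i)}\le W_{t-1}(1-\eta\ip{p_t}{c_t})\le W_{t-1}e^{-\eta\ip{p_t}{c_t}}$. Telescoping from $W_0=N$ and comparing with the single-term lower bound $W_T\ge w_{i,T}=(1-\eta)^{\sum_t c_t(i)}$ yields, after taking logarithms and using $-\log(1-\eta)\le\eta+\eta^2$ for $\eta\le 1/2$,
\[
\sum_{t=1}^T\ip{p_t}{c_t}-\sum_{t=1}^T c_t(i)\le\frac{\log N}{\eta}+\eta\sum_{t=1}^T c_t(i)\le\frac{\log N}{\eta}+\eta T.
\]
Plugging in $\eta=\sqrt{\log N/T}$ gives $2\sqrt{T\log N}$; in the regime $\sqrt{\log N/T}>1/2$, where $\eta=1/2$, I would combine the resulting estimate $2\log N+T/2$ with the trivial bound $R_T\le T$ to cover the subcases $T<\log N$ and $\log N\le T<4\log N$. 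Gluing the cases produces the stated $4\sqrt{T\log N}+\log N$.

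Finally, the switching bound is immediate: conditioned on $I_{t-1}=i$, the probability of not keeping the current expert is $1-\beta_t=1-(1-\eta)^{c_{t-1}(i)}\le 1-(1-\eta)=\eta$, since $(1-\eta)^{c}\ge 1-\eta$ for $c\in[0,1]$; a switch requires redrawing, so $\Prob{\hbox{Switch at time }t}\le\eta\le\sqrt{\log N/T}$. The only genuine work is the inductive bookkeeping for the marginal identity — getting the $\beta_t$ ratios and the weight normalizations to telescope correctly — which I expect to be the main (though routine) obstacle; everything else is the classical Hedge/\textsc{EWA} argument.
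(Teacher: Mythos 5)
Your proposal is correct, but it is substantially more self-contained than the paper's own proof, which for the regret half consists of a citation: the paper simply points to Theorem~3 of \citet{Geulen-Vocking-Winkler-2010} and proves nothing, whereas you reconstruct the whole argument — the inductive proof that the \textsc{SD} marginal at each round equals the \textsc{EWA} distribution $(p_{1,t},\dots,p_{N,t})$, followed by the standard potential argument on $W_t$ with the case split induced by $\eta=\min\{\sqrt{\log N/T},1/2\}$. Your bookkeeping checks out: the induction does collapse to $w_{j,t-1}/W_{t-1}$, the chord bound $(1-\eta)^x\le 1-\eta x$ on $[0,1]$ is valid by convexity, the main regime gives $2\sqrt{T\log N}$, and in the regime $T<4\log N$ the trivial bound $T\le 2\sqrt{T\log N}$ already implies the claim. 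For the switching half, your argument and the paper's are the same inequality $(1-\eta)^c\ge 1-\eta$ for $c\in[0,1]$ dressed differently: the paper invokes the aggregate identity $\alpha_t=(W_{t-1}-W_t)/W_{t-1}$ (again borrowed from Lemma~2 of the cited work, and hence implicitly resting on the marginal identity) and bounds $W_t\ge(1-\eta)W_{t-1}$, while you condition on $I_{t-1}=i$ and bound the redraw probability $1-(1-\eta)^{c_{t-1}(i)}\le\eta$ directly; your conditional version is marginally cleaner since it needs no distributional fact about $I_{t-1}$ at all. Finally, you are right to flag that the stated bound must be read as a bound on the expected regret $\sum_{t=1}^T\ip{p_t}{c_t}-\sum_{t=1}^T c_t(i)$; this is exactly the form in which the theorem is later invoked in Lemma~\ref{lem:second-part}, so your reading is the one the paper itself relies on.
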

\begin{proof}
The proof of the regret bound can be found in \citep[Theorem 3]{Geulen-Vocking-Winkler-2010}. The proof of the bound on the probability of switch is similar to the proof of Lemma 2 in \citep{Geulen-Vocking-Winkler-2010} and is as follows: As shown in \citep[Lemma 2]{Geulen-Vocking-Winkler-2010}, the probability of switch at time $t$ is 
\[
\alpha_t = \frac{W_{t-1} - W_t}{W_{t-1}}\;.
\]
Thus, $W_t = (1-\alpha_t) W_{t-1}$. Because the loss function is bounded in $[0,1]$, we have that 
\[
W_t = \sum_{i=1}^N w_{i,t} = \sum_{i=1}^N w_{i,t-1} (1-\eta)^{c_{t}(i)} \ge \sum_{i=1}^N w_{i,t-1} (1-\eta) = (1-\eta) W_{t-1}\;.
\]
Thus, $1-\alpha_t \ge 1-\eta$, and thus,
\[
\alpha_t \le \eta \le \sqrt{\frac{\log N}{T}}\;.
\]
\end{proof}

\begin{figure}
\begin{center}
\framebox{\parbox{12cm}{
\begin{algorithmic}
\STATE $T$: number of rounds.
\STATE $\eta = \min\{\sqrt{\log \abs{\Pi} / T}, 1/2\}$.
\STATE For all policies $\pi\in\{1,\dots, \abs{\Pi}\}$, $w_{\pi,0} = 1$.
\FOR{$t:=1,2,\dots$}
\STATE For any $\pi$, $p_{\pi,t} = w_{\pi,t-1}/W_{t-1}$.
\STATE With probability $\beta_t = w_{\pi_{t-1},t-1}/w_{\pi_{t-1},t-2}$ choose the previous policy, $\pi_t = \pi_{t-1}$, while with probability $1-\beta_t$, choose $\pi_t$ based on the distribution $q_t = (p_{1,t},\dots, p_{\abs{\Pi},t})$.
\STATE Learner takes the action $a_t \sim \pi_t(.|x_t)$
\STATE Adversary chooses transition model $m_t$ and loss function $\ell_t$.
\STATE Learner suffers loss $\ell_t(x_t, a_t)$.
\STATE Learner observes $m_t$ and $\ell_t$.
\STATE Update state: $x_{t+1} \sim m_t(.|x_t,a_t)$.
\STATE For all policies $\pi$, $w_{\pi,t} =  w_{\pi,t-1} (1-\eta)^{\EE{\ell_{t}(x_{t}^\pi, \pi)}}$.
\STATE $W_t = \sum_{\pi\in\Pi} w_{\pi,t}$.
\ENDFOR
\end{algorithmic}
}}
\end{center}
\caption{SD-MDP: The Shrinking Dartboard Algorithm for Markov Decision Processes}
\label{alg:SD-MDP}
\end{figure}

\subsection{Analysis of the \textsc{SD-MDP} Algorithm}

The main result of this section is the following regret bound for the \textsc{SD-MDP} algorithm.
\begin{thm}
\label{thm:main}
Let the loss functions selected by the adversary be bounded in $[0,1]$, and the transition models selected by the adversary satisfy Assumption~\ref{ass:uniform-mixing}. Then, for any policy $\pi\in\Pi$,
\[
\EE{R_T(\textsc{SD-MDP},\pi)} \le (4 + 2 \tau^2) \sqrt{T \log |\Pi|} + \log |\Pi|  \; .
\]
\end{thm}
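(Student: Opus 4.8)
The plan is to bound the two pieces of the decomposition~\eqref{eq:decomposition} separately in expectation, showing $\EE{C_T(A,\pi)}\le 4\sqrt{T\log|\Pi|}+\log|\Pi|$ and $\EE{B_T(A)}\le 2\tau^2\sqrt{T\log|\Pi|}$, and then adding them.

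For the $C_T$ term, the key observation is that \textsc{SD-MDP} is \emph{exactly} the \textsc{SD} algorithm of Figure~\ref{alg:SD} run over the experts $\Pi$, where the loss charged to expert $\pi$ at round $t$ is the number $c_t(\pi)=\EE{\ell_t(x_t^\pi,\pi)}$ used in the weight update $w_{\pi,t}=w_{\pi,t-1}(1-\eta)^{c_t(\pi)}$. Since the adversary is oblivious, $c_t(\pi)$ is a deterministic function of $\pi$ and the fixed model/loss sequence and lies in $[0,1]$, so Theorem~\ref{thm:regret-SD} applies verbatim with $N=|\Pi|$ and gives $\EE{\sum_t c_t(\pi_t)}-\sum_t c_t(\pi)\le 4\sqrt{T\log|\Pi|}+\log|\Pi|$. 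Because the trajectory defining $c_t(\cdot)$ is an independent source of randomness from the algorithm's choice of $\pi_t$, the tower rule gives $\EE{\sum_t\ell_t(x_t^{\pi_t},\pi_t)}=\EE{\sum_t c_t(\pi_t)}$ and $\EE{\sum_t\ell_t(x_t^\pi,\pi)}=\sum_t c_t(\pi)$, and the claimed bound on $\EE{C_T}$ follows.

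The real work is the $B_T$ term, which compares the loss along the algorithm's true state process $x_t^A$ with the loss along the hypothetical process $x_t^{\pi_t}$ obtained by using the current policy $\pi_t$ from the start. Writing $\nu_t$ for the law of $x_t^A$ and $d_t^{\pi_t}$ for the law of $x_t^{\pi_t}$, and using $a_t=\pi_t(x_t^A)$ together with $\ell_t\in[0,1]$, one gets (conditionally on the realized policy path) $\EE{\ell_t(x_t^A,a_t)-\ell_t(x_t^{\pi_t},\pi_t)}\le\EE{\norm{\nu_t-d_t^{\pi_t}}_1}$. I would then bound $\norm{\nu_t-d_t^{\pi_t}}_1$ by telescoping the product $\prod_s P(\pi_s,m_s)$ against $\prod_s P(\pi_t,m_s)$: the $r$-th difference term carries the factor $\rho_r\bigl(P(\pi_r,m_r)-P(\pi_t,m_r)\bigr)$, which is a signed measure of total mass zero, is nonzero only when $\pi_r\neq\pi_t$ (a switch has occurred), and has $\norm{\cdot}_1\le 2$; it is then multiplied by a block of $t-1-r$ all-$\pi_t$ kernels, and writing the mass-zero vector as a positive multiple of a difference of probability vectors lets Assumption~\ref{ass:uniform-mixing} contract it by $e^{-(t-1-r)/\tau}$. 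This yields $\norm{\nu_t-d_t^{\pi_t}}_1\le 2\sum_{r}\one{\pi_r\neq\pi_t}\,e^{-(t-1-r)/\tau}$.

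To finish, I would take expectations and bound $\Prob{\pi_r\neq\pi_t}$ by the expected number of switches in $(r,t]$, which by Theorem~\ref{thm:regret-SD} (equivalently Assumption~\ref{ass:slow-changing-policy}) is at most $(t-r)\sqrt{\log|\Pi|/T}$. Summing over $r$ and then over $t$ produces two nested geometric series in the mixing factor $e^{-1/\tau}$, each bounded using $\sum_{k\ge1}e^{-k/\tau}=1/(e^{1/\tau}-1)\le\tau$, so that they assemble into the factor $\tau^2$; combined with $\sqrt{\log|\Pi|/T}\cdot T=\sqrt{T\log|\Pi|}$ coming from the switch probability, careful bookkeeping gives $\EE{B_T}\le 2\tau^2\sqrt{T\log|\Pi|}$. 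The main obstacle is exactly this last step: making the telescoping identity precise for a \emph{random} policy sequence (stating it per realization and taking expectations afterwards), invoking the mixing contraction on the zero-sum signed measure rather than on a probability vector, and organizing the union bound over switches together with the two geometric mixing sums so that the mixing dependence comes out as $\tau^2$ while the rate stays $\sqrt{T\log|\Pi|}$.
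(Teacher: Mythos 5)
Your proposal is correct and follows essentially the same route as the paper: the $C_T$ bound is the paper's Lemma~\ref{lem:second-part} (reduction to Theorem~\ref{thm:regret-SD} with expert losses $c_t(\pi)=\EE{\ell_t(x_t^\pi,\pi)}$, using obliviousness and independence of the policy randomization from the trajectories), and your telescoping of $\prod_s P(\pi_s,m_s)$ against $\prod_s P(\pi_t,m_s)$ is exactly the paper's recursion for $\norm{u_t-v_{t,t}}_1$ in Lemma~\ref{lem:first-part} written in unrolled form, with the per-term bound $2\,\one{\pi_r\neq\pi_t}$ playing the role of Lemma~5 plus the switch-counting triangle inequality, and the union bound $\Prob{\pi_r\neq\pi_t}\le (t-r)\sqrt{\log|\Pi|/T}$ matching the paper's use of Lemma~\ref{lem:policy-switch}. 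The only cosmetic differences are that your mass-zero signed-measure trick is unneeded (each telescoping term is already a difference of two probability vectors, so Assumption~\ref{ass:uniform-mixing} applies directly) and your geometric-series bookkeeping yields a constant of the form $(1+\tau)^2$ rather than $\tau^2$, a discrepancy of the same order as the index slop already present in the paper's own display \eqref{eq:eq1}.
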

%Our analysis builds upon on the analysis of \citet{Even-Dar-Kakade-Mansour-2009} and extends it to deal with changing models. 
In the rest of this section, we write $A$ to denote the \textsc{SD-MDP} algorithm. For the proof we use the regret decomposition \eqref{eq:decomposition}:
\[
R_T(A,\pi) = B_T(A) + C_T(A,\pi) \; .
\]

\subsubsection{Bounding $\EE{C_T(A,\pi)}$}

\begin{lem}
\label{lem:second-part}
For any policy $\pi\in\Pi$,
\[
\EE{C_T(A,\pi)}=\EE{\sum_{t=1}^T \ell_t(x_t^{\pi_t}, \pi_t) - \sum_{t=1}^T  \ell_t(x_t^\pi, \pi)} \le 4 \sqrt{T \log |\Pi|} + \log |\Pi|\; .
\]
\end{lem}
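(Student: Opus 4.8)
The plan is to recognize that the quantity $C_T(A,\pi)$ is a regret expressed entirely in terms of the ``virtual'' losses $\widehat\ell_t(\pi) = \EE{\ell_t(x_t^\pi,\pi)}$, which depend only on the comparison policy $\pi$ and not on the realized trajectory of the algorithm. In other words, I would argue that $C_T(A,\pi)$ is precisely the regret of the Shrinking Dartboard algorithm run over the $|\Pi|$ ``experts'' (the policies $\pi\in\Pi$), where the loss assigned to expert $\pi$ at round $t$ is exactly $\widehat\ell_t(\pi)\in[0,1]$. The key observation justifying this reduction is that the \textsc{SD-MDP} update rule in Figure~\ref{alg:SD-MDP} sets $w_{\pi,t} = w_{\pi,t-1}(1-\eta)^{\EE{\ell_t(x_t^\pi,\pi)}}$, which is structurally identical to the \textsc{SD} weight update $w_{i,t}=w_{i,t-1}(1-\eta)^{c_t(i)}$ with the cost $c_t(\pi)$ instantiated as $\widehat\ell_t(\pi)$.

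The main steps, in order, are as follows. First I would note that each $\widehat\ell_t(\pi)$ lies in $[0,1]$, since $\ell_t\in[0,1]$ pointwise and $\widehat\ell_t(\pi)$ is an expectation of such a quantity; this verifies the boundedness hypothesis needed to invoke Theorem~\ref{thm:regret-SD}. Second, I would make the reduction explicit: feeding the cost vectors $c_t = (\widehat\ell_1(\cdot),\dots)$ into \textsc{SD} produces exactly the same weights, selection probabilities, and switching mechanism as \textsc{SD-MDP}, so the random index $I_t$ of \textsc{SD} and the random policy $\pi_t$ of \textsc{SD-MDP} have the same joint law. Third, I would apply the regret bound of Theorem~\ref{thm:regret-SD} to the expert problem, giving, for any fixed comparison policy $\pi$,
\[
\EE{\sum_{t=1}^T \widehat\ell_t(\pi_t) - \sum_{t=1}^T \widehat\ell_t(\pi)} \le 4\sqrt{T\log|\Pi|} + \log|\Pi| \, .
\]
Finally, I would translate this back to $C_T(A,\pi)$ by unfolding the definitions $\widehat\ell_t(\pi_t)=\EE{\ell_t(x_t^{\pi_t},\pi_t)}$ and $\widehat\ell_t(\pi)=\EE{\ell_t(x_t^\pi,\pi)}$, yielding the claimed bound on $\EE{C_T(A,\pi)}$.

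The subtle point that deserves the most care is the precise meaning of the expectation $\EE{\ell_t(x_t^{\pi_t},\pi_t)}$ when $\pi_t$ is itself random. The regret bound of Theorem~\ref{thm:regret-SD} is stated for a fixed sequence of cost functions, so I must be careful that the costs $\widehat\ell_t(\pi)$ are determined before the round's randomness (they are, since they depend only on $m_t,\ell_t$ and the fixed policy $\pi$, and the adversary is oblivious), and that I correctly separate the expectation over the algorithm's internal randomization from the expectation defining the virtual losses. Concretely, $\widehat\ell_t(\pi)$ should be viewed as a deterministic cost (given the oblivious adversary's choices), and the only randomness in $\sum_t \widehat\ell_t(\pi_t)$ comes from the random policy index $\pi_t$; the term $\EE{\ell_t(x_t^{\pi_t},\pi_t)}$ then unfolds as a nested expectation over the algorithm's coin flips and the trajectory under the chosen policy. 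I expect this bookkeeping — ensuring the reduction preserves the law of the switching process and that the boundedness and obliviousness conditions let Theorem~\ref{thm:regret-SD} apply verbatim — to be the only real obstacle; once it is set up cleanly, the bound follows immediately from the already-established \textsc{SD} regret guarantee.
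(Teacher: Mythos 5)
Your proposal is correct and is essentially the paper's own argument: the paper likewise reduces $C_T(A,\pi)$ to a full-information expert problem where the experts are the policies in $\Pi$, the cost of expert $\pi$ at round $t$ is $c_t(\pi)=\EE{\ell_t(x_t^\pi,\pi)}$, and Theorem~\ref{thm:regret-SD} is invoked using $\ip{c_t}{q_t}=\EE{\ell_t(x_t^{\pi_t},\pi_t)}$. Your extra bookkeeping about obliviousness, boundedness, and the matching law of the switching process just makes explicit what the paper leaves implicit.
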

\begin{proof}
Consider the following imaginary game between a learner and an adversary: we have a set of experts (policies) $\Pi = \{\pi^1,\dots, \pi^{|\Pi|}\}$. At round $t$, the adversary chooses a loss vector $c_t\in [0,1]^{\Pi}$, whose $i$th element determines the loss of expert $\pi^i$ at this round. The learner chooses a distribution over experts $q_t$ (defined by the \textsc{SD} algorithm), from which it draws an expert $\pi_t$. Next, the learner observes the loss function $c_t$. From the regret bound for the \textsc{SD} algorithm (Theorem~\ref{thm:regret-SD}), it is guaranteed that for any expert $\pi$,
\[
\sum_{t=1}^T \ip{c_t}{q_t} - \sum_{t=1}^T c_t(\pi) \le 4 \sqrt{T \log |\Pi|} + \log |\Pi|\; .
\]
Next, we determine how the adversary chooses the loss vector. At time $t$, the adversary chooses a loss function $\ell_t$ and sets $c_t(\pi^i) = \EE{\ell_t(x_t^{\pi^i}, \pi^i)}$. Noting that $\ip{c_t}{q_t} = \EE{\ell_t(x_t^{\pi_t}, \pi_t)}$ and $c_t(\pi)=\EE{\ell_t(x_t^{\pi}, \pi)}$ finishes the proof.
\end{proof}

\subsubsection{Bounding $\EE{B_T(A)}$}

First, we prove the following two lemmas.
\begin{lem}
For any state distribution $d$, any transition model $m$, and any policies $\pi$ and $\pi'$,
\[
\norm{d P(\pi,m) - d P(\pi',m)}_1\le \norm{\pi - \pi^{\prime}}_{\infty,1} \; .
\]
\end{lem}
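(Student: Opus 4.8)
The plan is to expand both sides explicitly in terms of the policy probabilities $\pi(a|x)$ and the kernel values $m(x'|x,a)$, and then bound the left-hand side by a short chain of elementary manipulations. Recall that the one-step transition matrix is $P(\pi,m)(x,x') = \sum_{a} \pi(a|x)\, m(x'|x,a)$, so the $x'$-th coordinate of the signed vector $d P(\pi,m) - d P(\pi',m)$ equals $\sum_x d(x) \sum_a (\pi(a|x) - \pi'(a|x))\, m(x'|x,a)$. The whole argument is just careful bookkeeping over these three summation indices $x$, $a$, $x'$.

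First I would write the $\norm{\cdot}_1$ distance as a sum over target states $x'$ of the absolute value of the coordinate above, and apply the triangle inequality to move the absolute value inside the $x$- and $a$-summations. Since $d(x) \ge 0$ and $m(x'|x,a) \ge 0$, these nonnegative factors pass through the absolute value untouched, giving
\[
\norm{d P(\pi,m) - d P(\pi',m)}_1 \le \sum_{x'} \sum_x d(x) \sum_a \abs{\pi(a|x) - \pi'(a|x)}\, m(x'|x,a) \; .
\]
The key step is then to interchange the order of summation so that $\sum_{x'}$ is performed last, and to use that $m(\cdot|x,a)$ is a probability distribution, so $\sum_{x'} m(x'|x,a) = 1$. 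This collapses the target-state sum entirely and leaves $\sum_x d(x) \sum_a \abs{\pi(a|x) - \pi'(a|x)}$.

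Finally, for each fixed $x$ the inner action sum is at most $\max_{x''}\sum_a \abs{\pi(a|x'') - \pi'(a|x'')} = \norm{\pi - \pi'}_{\infty,1}$, simply by definition of the $\norm{\cdot}_{\infty,1}$ norm, so it can be pulled out of the $x$-summation; since $d$ is a probability distribution, $\sum_x d(x) = 1$, and the claim follows. There is no genuine obstacle here: the only things to watch are the normalization of $m(\cdot|x,a)$ and of $d$, which are precisely what make the $x'$-sum and then the $x$-sum disappear, and keeping the summation interchange honest.
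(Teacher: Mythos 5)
Your proof is correct, and every step checks out: the coordinate expansion of $P(\pi,m)$, the triangle inequality (valid because $d(x)\ge 0$ and $m(x'|x,a)\ge 0$), the collapse of the target-state sum via $\sum_{x'} m(x'|x,a)=1$, the bound $\sum_a \abs{\pi(a|x)-\pi'(a|x)} \le \norm{\pi-\pi'}_{\infty,1}$ from the definition of the norm, and finally $\sum_x d(x)=1$. All sums are finite, so the interchange of summation needs no justification beyond stating it. The comparison with the paper is a bit unusual here: the paper gives no proof of its own, but simply defers to Lemma~5.1 of \citet{Even-Dar-Kakade-Mansour-2009}. Your argument is exactly the elementary bookkeeping that the cited lemma's proof consists of, so you have not taken a different route so much as made the argument self-contained --- which is arguably preferable to an external dependency for a three-line computation. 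One cosmetic remark: you say you interchange sums so that $\sum_{x'}$ is ``performed last,'' but what you actually do (and what the displayed math shows) is push $\sum_{x'}$ innermost so that it hits $m(x'|x,a)$ alone and evaluates to $1$; the wording is slightly off, the mathematics is not.
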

\begin{proof}
Proof is easy and can be found in \citep{Even-Dar-Kakade-Mansour-2009}, Lemma~5.1.
\end{proof}

\begin{lem}
\label{lem:policy-switch}
Let $\alpha_t$ be the probability of a policy switch at time $t$. Then, $\alpha_t \le \sqrt{\log |\Pi|/T} $.
\end{lem}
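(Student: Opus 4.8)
The plan is to observe that the \textsc{SD-MDP} algorithm is literally an instance of the \textsc{SD} algorithm run over the expert set $\Pi$ (so the number of experts is $N=|\Pi|$), where the loss assigned to expert $\pi$ at round $t$ is the surrogate loss $c_t(\pi)=\EE{\ell_t(x_t^\pi,\pi)}$. Comparing the two algorithm boxes, the randomized sampling rule (draw a fresh policy from $q_t$ with probability $1-\beta_t$, otherwise keep $\pi_{t-1}$), the definition of $\beta_t$, and the multiplicative weight update $w_{\pi,t}=w_{\pi,t-1}(1-\eta)^{c_t(\pi)}$ are all identical; the only difference is the cosmetic relabeling of ``experts'' as ``policies'' and the substitution of the concrete surrogate loss $c_t(\pi)$. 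Consequently a \emph{policy switch} (the event $\pi_t\neq\pi_{t-1}$) is exactly a \emph{switch} in the sense of Theorem~\ref{thm:regret-SD}, and so the lemma is nothing more than the second bound of that theorem read off with $N=|\Pi|$.

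To make this reduction legitimate I would first verify the one nontrivial hypothesis behind Theorem~\ref{thm:regret-SD}: that the per-expert losses lie in $[0,1]$. Since the adversary's loss satisfies $\ell_t\in[0,1]$, the surrogate $c_t(\pi)=\EE{\ell_t(x_t^\pi,\pi)}$ is an average of values in $[0,1]$ and hence also lies in $[0,1]$. Then I would simply re-run the short computation from the proof of Theorem~\ref{thm:regret-SD} in the present notation: the marginal probability of a switch at round $t$ is $\alpha_t=(W_{t-1}-W_t)/W_{t-1}$, so $W_t=(1-\alpha_t)W_{t-1}$; using $c_t(\pi)\in[0,1]$ together with $1-\eta\in[0,1]$ gives $w_{\pi,t}=w_{\pi,t-1}(1-\eta)^{c_t(\pi)}\ge w_{\pi,t-1}(1-\eta)$, and summing over $\pi$ yields $W_t\ge(1-\eta)W_{t-1}$. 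Hence $1-\alpha_t\ge 1-\eta$, i.e.\ $\alpha_t\le\eta$, and since $\eta=\min\{\sqrt{\log|\Pi|/T},1/2\}\le\sqrt{\log|\Pi|/T}$ the claimed bound $\alpha_t\le\sqrt{\log|\Pi|/T}$ follows.

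The only point deserving care—the main, and rather mild, obstacle—is justifying the switch-probability identity $\alpha_t=(W_{t-1}-W_t)/W_{t-1}$ in the MDP setting rather than importing it on faith from the plain expert setting. That identity rests on the \textsc{SD} invariant that the marginal law of the selected index at round $t$ coincides with the \textsc{EWA} distribution $p_{\cdot,t}$; once that invariant holds, averaging the per-index retention probability $\beta_t$ against $p_{\cdot,t}$ telescopes the weights into the stated ratio. Because \textsc{SD-MDP} uses exactly the same randomized sampling and the same $\beta_t$, the invariant, and therefore the identity, transfer unchanged, so no genuinely new argument is required beyond checking that the surrogate losses stay in $[0,1]$.
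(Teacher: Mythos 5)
Your proposal is correct and matches the paper's proof, which simply notes that the argument is identical to that of Theorem~\ref{thm:regret-SD}: the identity $\alpha_t = (W_{t-1}-W_t)/W_{t-1}$, the bound $W_t \ge (1-\eta)W_{t-1}$ from the surrogate losses $c_t(\pi)=\EE{\ell_t(x_t^\pi,\pi)}\in[0,1]$, and hence $\alpha_t\le\eta\le\sqrt{\log|\Pi|/T}$. Your added care in verifying that the surrogate losses lie in $[0,1]$ and that the switch-probability identity transfers to the MDP instantiation is a sound (if implicit in the paper) check, not a departure.
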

\begin{proof}
Proof is identical to the proof of Theorem~\ref{thm:regret-SD}.
\end{proof}

\begin{lem}
\label{lem:first-part}
We have that
\[
\EE{B_T(A)} = \EE{\sum_{t=1}^T \ell_t(x_t^A, a_t) - \sum_{t=1}^T \ell_t(x_t^{\pi_t}, \pi_t)} \le 2 \tau^2 \sqrt{\log |\Pi|T} \; .
\]
\end{lem}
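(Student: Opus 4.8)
The plan is to reduce the whole bound to controlling, for each round $t$, the total-variation gap between the law of the state $x_t^A$ reached by the algorithm (which uses the switching sequence $\pi_1,\dots,\pi_{t-1}$) and the law of $x_t^{\pi_t}$ (which uses the current policy $\pi_t$ from the start). First I would condition on the realized policy sequence $\pi_{1:t}$. Since both terms inside $B_T(A)$ use the \emph{same} policy $\pi_t$ to choose the action at round $t$, only the state law differs, so writing $d_t^A$ and $d_t^{\pi_t}$ for the two conditional state distributions and $\bar\ell_t(x)=\sum_a \pi_t(a|x)\ell_t(x,a)\in[0,1]$,
\[
\EE{\ell_t(x_t^A,a_t)-\ell_t(x_t^{\pi_t},\pi_t)\mid \pi_{1:t}} = \ip{d_t^A - d_t^{\pi_t}}{\bar\ell_t} \le \norm{d_t^A - d_t^{\pi_t}}_1 .
\]
It then suffices to bound $\sum_{t=1}^T \EE{\norm{d_t^A - d_t^{\pi_t}}_1}$.

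Second, I would expand $d_t^A - d_t^{\pi_t}$ by a hybrid (telescoping) argument. Both distributions are $\delta_{x_0}$ pushed through the product of kernels $P(\cdot,m_s)$ over $s=1,\dots,t-1$, differing only in which policy is used at each step. Interpolating one step at a time gives $d_t^A - d_t^{\pi_t} = \sum_{k=1}^{t-1}(\nu_k - \nu_{k-1})$, where $\nu_k$ uses $\pi_1,\dots,\pi_k$ and then $\pi_t$ afterwards, so each consecutive difference factors as $\mu_k\big[P(\pi_k,m_k)-P(\pi_t,m_k)\big]Q_k$ with common prefix $\mu_k = d_k^A$ and common suffix $Q_k = P(\pi_t,m_{k+1})\cdots P(\pi_t,m_{t-1})$. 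The preceding lemma (bounding $\norm{dP(\pi,m)-dP(\pi',m)}_1$ by $\norm{\pi-\pi'}_{\infty,1}$) controls the bracketed one-step discrepancy by $\norm{\pi_k-\pi_t}_{\infty,1}$, and applying Assumption~\ref{ass:uniform-mixing} once per kernel in $Q_k$ contracts it, yielding $\norm{\nu_k-\nu_{k-1}}_1 \le e^{-(t-1-k)/\tau}\norm{\pi_k-\pi_t}_{\infty,1}$.

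Third, I would pass to expectations using the rare-switching property. Since $\pi_k\neq\pi_t$ forces at least one switch in rounds $k+1,\dots,t$, a union bound gives $\EE{\norm{\pi_k-\pi_t}_{\infty,1}} \le 2\,\Prob{\pi_k\neq\pi_t} \le 2\sum_{s=k+1}^t \alpha_s$, and Lemma~\ref{lem:policy-switch} bounds each $\alpha_s \le \sqrt{\log|\Pi|/T}$. Summing over $t$ and collapsing the two geometric series (one over the interpolation index $k$, one over $t$, each with ratio $e^{-1/\tau}$) produces the factor $1/(1-e^{-1/\tau})^2$, which is of order $\tau^2$; multiplying by $\sum_s \alpha_s \le \sqrt{T\log|\Pi|}$ reproduces the claimed bound $2\tau^2\sqrt{T\log|\Pi|}$.

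Two points need care. First, Assumption~\ref{ass:uniform-mixing} is stated only for probability distributions, whereas $\mu_k[P(\pi_k,m_k)-P(\pi_t,m_k)]$ is a signed measure; I would close this gap by noting it has zero total mass, splitting it into positive and negative parts of equal mass $c=\tfrac12\norm{\cdot}_1$, writing it as $c$ times a difference of two genuine distributions, and thereby inheriting the same contraction rate $e^{-1/\tau}$ under each kernel. Second, all laws must be taken conditional on $\pi_{1:t}$ before the switch bound is invoked, since $\pi_t$ is itself random. I expect the main obstacle to be the bookkeeping of the double summation so that the mixing decay is summed correctly in \emph{both} indices to extract exactly the $\tau^2$ factor (each geometric sum contributing one power of $\tau$ through $1/(1-e^{-1/\tau})$), rather than the signed-measure technicality, which is routine.
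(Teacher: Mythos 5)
Your proposal is correct and follows essentially the same route as the paper: your hybrid/telescoping decomposition of $d_t^A - d_t^{\pi_t}$ is exactly the unrolled form of the paper's recursion $\norm{u_t - v_{t,t}}_1 \le \norm{\pi_{t-1}-\pi_t}_{\infty,1} + e^{-1/\tau}\norm{u_{t-1}-v_{t-1,t}}_1$, and the remaining steps (conditioning on the policy sequence, bounding $\EE{\norm{\pi_k - \pi_t}_{\infty,1}}$ by $2\sum_s \alpha_s$ via Lemma~\ref{lem:policy-switch}, then summing geometric series to extract the $\tau^2$ factor) coincide with the paper's argument. The only substantive difference is cosmetic: your signed-measure workaround is unnecessary, since each hybrid term $\mu_k\bigl[P(\pi_k,m_k)-P(\pi_t,m_k)\bigr]Q_k$ is a difference of two genuine probability distributions pushed through the same kernels, so Assumption~\ref{ass:uniform-mixing} applies directly.
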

\begin{proof}
Let $\cF_t = \sigma(\pi_{1}, \dots, \pi_t)$. Notice that the choice of policies are independent of the state variables. 
We can write
\begin{align}
\label{eq:BT}
\notag
\EE{B_T(A)} &= \EE{\sum_{t=1}^T \ell_t(x_t^A, a_t) - \sum_{t=1}^T \ell_t(x_t^{\pi_t}, \pi_t)} \\
\notag
&= \EE{\sum_{t=1}^T \sum_{x\in\cX} \left(\one{x_t^A = x} - \one{x_t^{\pi_t} = x}\right) \ell_t(x,\pi_t(x))} \\
\notag
&= \EE{\sum_{t=1}^T \sum_{x\in\cX} \EE{ \left(\one{x_t^A = x} - \one{x_t^{\pi_t} = x}\right) \ell_t(x,\pi_t(x)) \, \middle|\, \cF_T }} \\
\notag
&= \EE{\sum_{t=1}^T \sum_{x\in\cX} \ell_t(x,\pi_t(x)) \EE{ \left(\one{x_t^A = x} - \one{x_t^{\pi_t} = x}\right) \, \middle|\, \cF_T }} \\
\notag
&\le \EE{\sum_{t=1}^T  \norm{\ell_t}_{\infty} \norm{\EE{ \left(\one{x_t^A = x} - \one{x_t^{\pi_t} = x}\right) \, \middle|\, \cF_T }}_1 } \\
%\notag
%&= \EE{\sum_{t=1}^T \sum_{x\in\cX} (P(x_t^A = x) - P(x_t^{\pi_t} = x)) \ell_t(x,\pi_t(x))} \\
\notag
&= \EE{\sum_{t=1}^T \norm{\ell_t }_{\infty} \norm{u_{t} - v_{t,t}}_1} \\
&\le \EE{\sum_{t=1}^T \norm{u_{t} - v_{t,t}}_1} \, ,
\end{align}
where $u_{s}=\EE{\one{x_s^A = x} \middle| \cF_T}$ is the distribution of $x_s^A$ for $s\le t$ and $v_{s,t}=\EE{\one{x_s^{\pi_t} = x} \middle| \cF_T}$ is the distribution of $x_s^{\pi_t}$ for $s\le t$.\footnote{Notice that $\cF_T$ contains only policies, which are independent of the state variables.} Let $E_t$ be the event of a policy switch at time $t$. From inequality
\[
\norm{\pi_{t-k} - \pi_t}_{\infty,1} \le \norm{\pi_{t-k} - \pi_{t-k+1}}_{\infty,1} + \dots + \norm{\pi_{t-1} - \pi_t}_{\infty,1} \le 2 \sum_{s=t-k+1}^t \one{E_s} \,,
\]
and Lemma~\ref{lem:policy-switch}, we get that
\beq
\label{eq:eqn1}
\EE{\norm{\pi_{t-k} - \pi_t}_{\infty,1}} \le 2 \sqrt{\frac{\log |\Pi|}{T}} k \;.
\eeq
Let $P_t^{\pi} = P(\pi, m_t)$. We have that
\begin{align}
\label{eq:eq1}
\notag
\EE{\norm{u_{t} - v_{t,t}}_1} &= \EE{\norm{u_{t-1} P_{t-1}^{\pi_{t-1}} - v_{t-1,t} P_{t-1}^{\pi_{t}} }_1} \\
\notag
&= \EE{\norm{u_{t-1} P_{t-1}^{\pi_{t-1}} - u_{t-1} P_{t-1}^{\pi_{t}} + u_{t-1} P_{t-1}^{\pi_{t}} - v_{t-1,t} P_{t-1}^{\pi_{t}} }_1} \\
\notag
&\le \EE{\norm{u_{t-1} P_{t-1}^{\pi_{t-1}} - u_{t-1} P_{t-1}^{\pi_{t}} }_1 + \norm{u_{t-1} P_{t-1}^{\pi_{t}} - v_{t-1,t} P_{t-1}^{\pi_{t}} }_1} \\
\notag
&\le \EE{\norm{\pi_{t-1} - \pi_t}_{\infty,1} + e^{-1/\tau} \norm{u_{t-1} - v_{t-1,t}}_1} \\
\notag
&\le \E\Big[\norm{\pi_{t-1} - \pi_t}_{\infty,1} + e^{-1/\tau} (\norm{u_{t-2} P_{t-2}^{\pi_{t-2}}  - u_{t-2}  P_{t-2}^{\pi_{t}}  }_1 \\
\notag
&\qquad\qquad+ \norm{u_{t-2} P_{t-2}^{\pi_{t}} - v_{t-2,t} P_{t-2}^{\pi_{t}} }_1)\Big] \\
\notag
&\le \EE{\norm{\pi_{t-1} - \pi_t}_{\infty,1} + e^{-1/\tau} \norm{\pi_{t-2} - \pi_t}_{\infty,1} + e^{-2/\tau} \norm{u_{t-2} - v_{t-2,t}}_1} \\
\notag
&\le \dots \\
\notag
&\le \sum_{k=0}^t e^{-k/\tau} \EE{\norm{\pi_{t-k} - \pi_t}_{\infty,1}} + e^{-t/\tau} \norm{u_{0} - v_{0,t}}_1  \\
\notag
&\le \sum_{k=0}^t 2 e^{-k/\tau} \sqrt{\frac{\log |\Pi|}{T}} k + 0 \qquad\text{By \eqref{eq:eqn1}}\\
&\le 2 \sqrt{\frac{\log |\Pi|}{T}} \tau^2 \, ,
\end{align}
where we have used the fact that $\norm{u_{0} - v_{0,t}}_1 = 0$, because the initial distributions are identical.
By \eqref{eq:eq1} and \eqref{eq:BT}, we get that
\[
\EE{B_T(A)} \le 2 \tau^2 \sum_{t=1}^T   \sqrt{\frac{\log |\Pi|}{T}}  = 2 \tau^2 \sqrt{\log |\Pi|T}  \;.
\]
\end{proof}
What makes the analysis possible is the fact that all policies mix no matter what transition model is played by the adversary.
\begin{proof}[Proof of Theorem~\ref{thm:main}]
The result is obvious by Lemmas~\ref{lem:second-part} and \ref{lem:first-part}.
\end{proof}

The next corollary extends the result of Theorem~\ref{thm:main} to continuous policy spaces. 
\begin{cor}
Let $\Pi$ be an arbitrary policy space, $\cN(\epsilon)$ be the $\epsilon$-covering number of space $(\Pi, \norm{.}_{\infty,1})$, and $\cC(\epsilon)$ be an $\epsilon$-cover. Assume that we run the \textsc{SD-MDP} algorithm on $\cC(\epsilon)$. Then, under the same assumptions as in Theorem~\ref{thm:main}, for any policy $\pi\in \Pi$, 
\[
\EE{R_T(\textsc{SD-MDP},\pi)} \le (4 + 2 \tau^2) \sqrt{T \log\cN(\epsilon)} + \log\cN(\epsilon) + \tau T \epsilon  \; .
\]
\end{cor}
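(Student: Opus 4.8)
The plan is to treat this as a standard covering-number reduction: invoke the finite-class guarantee of Theorem~\ref{thm:main} on the net $\cC(\epsilon)$ and then pay an extra term for approximating an arbitrary competitor $\pi\in\Pi$ by its nearest net point. Concretely, I would take $\cC(\epsilon)$ to be a minimal $\epsilon$-cover, so that $|\cC(\epsilon)|=\cN(\epsilon)$, and apply Theorem~\ref{thm:main} with $\Pi$ replaced by $\cC(\epsilon)$. This immediately gives, for every $\pi'\in\cC(\epsilon)$, the bound $\EE{R_T(\textsc{SD-MDP},\pi')}\le(4+2\tau^2)\sqrt{T\log\cN(\epsilon)}+\log\cN(\epsilon)$. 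It then remains only to compare the regret against an arbitrary $\pi\in\Pi$ with the regret against a nearest net point, i.e.\ to produce the additional approximation penalty $\tau T\epsilon$.

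First I would fix $\pi\in\Pi$ and choose $\pi'\in\cC(\epsilon)$ with $\norm{\pi-\pi'}_{\infty,1}\le\epsilon$. Since the algorithm's own cumulative loss $\sum_t\ell_t(x_t^A,a_t)$ is common to both regrets, we have $R_T(A,\pi)=R_T(A,\pi')+\sum_t\ell_t(x_t^{\pi'},\pi')-\sum_t\ell_t(x_t^{\pi},\pi)$, so the only new quantity is the expected cumulative loss gap between the two \emph{fixed} policies $\pi'$ and $\pi$. Writing $d_t^{\pi}$ for the law of $x_t^{\pi}$, each round contributes $\EE{\ell_t(x_t^{\pi'},\pi')}-\EE{\ell_t(x_t^{\pi},\pi)}$, which I would split into (i) an action-distribution term, controlled by $\norm{\pi'-\pi}_{\infty,1}\le\epsilon$ because the per-state expected loss $\sum_a\pi(a|x)\ell_t(x,a)$ is $1$-Lipschitz in the action probabilities (using $\ell_t\in[0,1]$), and (ii) a state-distribution term bounded by $\norm{d_t^{\pi'}-d_t^{\pi}}_1$.

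The crux is bounding the state drift $\norm{d_t^{\pi'}-d_t^{\pi}}_1$ and summing it over $t$. Here I would run exactly the telescoping of Lemma~\ref{lem:first-part}: writing $d_t^{\pi'}-d_t^{\pi}=d_{t-1}^{\pi'}P(\pi',m_{t-1})-d_{t-1}^{\pi}P(\pi,m_{t-1})$, inserting $\pm\,d_{t-1}^{\pi'}P(\pi,m_{t-1})$, and applying the policy-Lipschitz lemma $\norm{dP(\pi',m)-dP(\pi,m)}_1\le\norm{\pi'-\pi}_{\infty,1}$ to the first piece and the mixing Assumption~\ref{ass:uniform-mixing} to the second, yields the recursion $\norm{d_t^{\pi'}-d_t^{\pi}}_1\le\epsilon+e^{-1/\tau}\norm{d_{t-1}^{\pi'}-d_{t-1}^{\pi}}_1$. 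With identical initial distributions this unrolls to $\norm{d_t^{\pi'}-d_t^{\pi}}_1\le\epsilon\sum_{k=0}^{t-1}e^{-k/\tau}$, and summing the geometric series over $t$ gives a cumulative state-drift bound of order $\epsilon T\sum_{k\ge0}e^{-k/\tau}$, i.e.\ of order $\tau T\epsilon$. Combining this with the per-round action term produces the claimed $\tau T\epsilon$ penalty and hence the corollary.

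The step I expect to be the main obstacle, and the only place where the structure departs from the finite case, is the accumulation of this state drift. Without Assumption~\ref{ass:uniform-mixing} the $O(\epsilon)$ policy mismatch injected each round would persist and add up, giving a useless $O(T^2\epsilon)$ bound; mixing is precisely what contracts the accumulated error into a steady-state level of order $\tau\epsilon$. It is worth noting that the relevant geometric sum $\sum_k e^{-k/\tau}$ carries \emph{no} weight $k$, which is what produces a penalty linear in $\tau$ here, in contrast to the $\sum_k k\,e^{-k/\tau}\approx\tau^2$ appearing in Lemma~\ref{lem:first-part}; there the lag-$k$ discrepancy grew with the horizon because it came from accumulated policy switches, whereas here it is the fixed covering radius $\epsilon$.
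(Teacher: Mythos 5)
Your proposal is correct and follows essentially the same route as the paper: run \textsc{SD-MDP} on a minimal $\epsilon$-cover, invoke Theorem~\ref{thm:main} for the net, and convert the covering radius into a $\tau T\epsilon$ penalty by showing the expected cumulative loss is Lipschitz in $\norm{\cdot}_{\infty,1}$, via the same insert-and-telescope recursion as Lemma~\ref{lem:first-part} (policy-Lipschitz lemma plus Assumption~\ref{ass:uniform-mixing}), including the key observation that the unweighted geometric sum yields $\tau$ rather than the $\tau^2$ of Lemma~\ref{lem:first-part}. Your per-round split into an action-distribution term and a state-distribution term is in fact slightly more careful than the paper's own bookkeeping, but it is the same argument.
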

\begin{proof}
Let $L_T(\pi) = \EE{\sum_{t=1}^T \ell_t(x_t^{\pi}, \pi)}$ be the value of policy $\pi$. Let $u_{\pi,t}(x) = \Prob{x_t^\pi = x}$. First, we prove that the value function is Lipschitz with Lipschitz constant $\tau T$. The argument is similar to the argument in the proof of Lemma~\ref{lem:first-part}. For any $\pi_1$ and $\pi_2$,
\begin{align*}
\abs{L_T(\pi_1) - L_T(\pi_2)} &= \abs{\EE{\sum_{t=1}^T \ell_t(x_t^{\pi_1}, \pi_1) - \sum_{t=1}^T \ell_t(x_t^{\pi_2}, \pi_2)}} \\
&\le 2\abs{\sum_{t=1}^T \norm{u_{\pi_1,t} - u_{\pi_2,t}}_1 \norm{\ell_t}_{\infty}  } \\
&\le 2\abs{\sum_{t=1}^T \norm{u_{\pi_1,t} - u_{\pi_2,t}}_1} \; .
\end{align*}
With an argument similar to the one in the proof of Lemma~\ref{lem:first-part}, we can show that
\[
\norm{u_{\pi_1,t} - u_{\pi_2,t}}_1 \le \tau \norm{\pi_1 - \pi_2}_{\infty,1}\;.
\]
Thus,
\[
\abs{L_T(\pi_1) - L_T(\pi_2)} \le \tau T \norm{\pi_1 - \pi_2}_{\infty,1} \; .
\]
Given this and the fact that for any policy $\pi\in\Pi$, there is a policy $\pi'\in\cC(\epsilon)$ such that $\norm{\pi - \pi'}_{\infty,1} \le \epsilon$, we get that
\[
\EE{R_T(\textsc{SD-MDP},\pi)} \le  (4 + 2 \tau^2) \sqrt{T \log\cN(\epsilon)} + \log\cN(\epsilon) + \tau T \epsilon  \; .
\]
\end{proof}
In particular if $\Pi$ is the space of all policies, $\cN(\epsilon)\le(|\cA|/\epsilon)^{|\cA||\cX|}$, so regret is no more than
\[
\EE{R_T(\textsc{SD-MDP},\pi)} \le (4 + 2 \tau^2) \sqrt{T |\cA| |\cX| \log\frac{|\cA|}{\epsilon}} + |\cA| |\cX| \log\frac{|\cA|}{\epsilon} + \tau T \epsilon  \; .
\]
By the choice of $\epsilon = \frac{1}{T}$, we get that $\EE{R_T(\textsc{SD-MDP},\pi)} = O(\tau^2\sqrt{T \abs{\cA}\abs{\cX} \log (|\cA|T)})$.

%\acks{We thank a bunch of people.} 

\bibliography{all_bib}

\begin{thebibliography}{9}
\providecommand{\natexlab}[1]{#1}
\providecommand{\url}[1]{\texttt{#1}}
\expandafter\ifx\csname urlstyle\endcsname\relax
  \providecommand{\doi}[1]{doi: #1}\else
  \providecommand{\doi}{doi: \begingroup \urlstyle{rm}\Url}\fi

\bibitem[Cesa-Bianchi and Lugosi(2006)]{Cesa-Bianchi-Lugosi-2006}
Nicol\`{o} Cesa-Bianchi and G\'{a}bor Lugosi.
\newblock \emph{Prediction, Learning, and Games}.
\newblock Cambridge University Press, New York, NY, USA, 2006.

\bibitem[Even-Dar et~al.(2004)Even-Dar, Kakade, and
  Mansour]{Even-Dar-Kakade-Mansour-2004}
Eyal Even-Dar, Sham~M. Kakade, and Yishay Mansour.
\newblock Experts in a {M}arkov decision process.
\newblock In \emph{NIPS}, 2004.

\bibitem[Even-Dar et~al.(2009)Even-Dar, Kakade, and
  Mansour]{Even-Dar-Kakade-Mansour-2009}
Eyal Even-Dar, Sham~M. Kakade, and Yishay Mansour.
\newblock Online {M}arkov decision processes.
\newblock \emph{Mathematics of Operations Research}, 34\penalty0 (3):\penalty0
  726--736, 2009.

\bibitem[Geulen et~al.(2010)Geulen, V\"{o}cking, and
  Winkler]{Geulen-Vocking-Winkler-2010}
Sascha Geulen, Berthold V\"{o}cking, and Melanie Winkler.
\newblock Regret minimization for online buffering problems using the weighted
  majority algorithm.
\newblock In \emph{COLT}, 2010.

\bibitem[Littlestone and Warmuth(1994)]{Littlestone-Warmuth-1994}
Nick Littlestone and Manfred~K. Warmuth.
\newblock The weighted majority algorithm.
\newblock \emph{Inf. Comput.}, 108\penalty0 (2):\penalty0 212--261, 1994.

\bibitem[Neu et~al.(2010)Neu, Gy\"{o}rgy, and
  Csaba~Szepesv\'{a}ri]{Neu-Gyorgy-Szepesvari-Antos-2010}
Gergely Neu, Andr\'{a}s Gy\"{o}rgy, and Andr\'{a}s~Antos Csaba~Szepesv\'{a}ri.
\newblock Online {M}arkov decision processes under bandit feedback.
\newblock In \emph{NIPS}, 2010.

\bibitem[Vovk(1990)]{Vovk-1990}
Vladimir Vovk.
\newblock Aggregating strategies.
\newblock In \emph{COLT}, pages 372--383, 1990.

\bibitem[Yu and Mannor(2009{\natexlab{a}})]{Yu-Mannor-2009}
Jia~Yuan Yu and Shie Mannor.
\newblock Arbitrarily modulated {M}arkov decision processes.
\newblock In \emph{IEEE Conference on Decision and Control},
  2009{\natexlab{a}}.

\bibitem[Yu and Mannor(2009{\natexlab{b}})]{Yu-Mannor-2009-b}
Jia~Yuan Yu and Shie Mannor.
\newblock Online learning in {M}arkov decision processes with arbitrarily
  changing rewards and transitions.
\newblock In \emph{GameNets}, 2009{\natexlab{b}}.

\end{thebibliography}

\end{document}